\documentclass[sigconf]{}
\settopmatter{printacmref=true}

\usepackage{url}

\usepackage{multirow}
\usepackage{balance}
\usepackage{amsfonts}
\usepackage{amsmath}
\usepackage{amssymb}
\usepackage{amsthm}
\usepackage{graphicx}
\usepackage{microtype}
\usepackage{url}
\usepackage{enumitem}
\usepackage{wrapfig,lipsum}
\usepackage{multirow}
\usepackage{makecell}
\usepackage{wrapfig}
\usepackage{tabularx}
\usepackage{subfigure}
\usepackage[noend,ruled]{algorithm2e}
\usepackage{bbm}
\usepackage{algpseudocode}
\usepackage{etoolbox}
\usepackage{subfigure}
\usepackage{float}
\usepackage{booktabs}
\usepackage{caption}

\newcommand{\bs}[1]{\boldsymbol{#1}}

\DeclareMathOperator*{\argmin}{arg\,min}

\newcommand{\wrong}[1]{#1 ($\times$)}
\newcommand{\ie}{\emph{i.e.}}
\newcommand{\eg}{\emph{e.g.}}
\newcommand{\CatEm}{\mbox{\sf CatE}\xspace}

\newcommand{\nop}[1]{}

\leftmargini=12pt 

\theoremstyle{definition}
\newtheorem{definition}{Definition}
\newtheorem{theorem}{Theorem}

\newtheorem{example}{Example}

\AtBeginDocument{%
  \providecommand\BibTeX{{%
    \normalfont B\kern-0.5em{\scshape i\kern-0.25em b}\kern-0.8em\TeX}}}

\setcopyright{acmcopyright}
\begin{document}
\fancyhead{}
\leftmargini=12pt 
\title{Discriminative Topic Mining via Category-Name Guided Text Embedding}

\author{Yu Meng$^{1*}$, Jiaxin Huang$^{1*}$, Guangyuan Wang$^{1}$, Zihan Wang$^{1}$, \\ Chao Zhang$^{2}$, Yu Zhang$^{1}$, Jiawei Han$^{1}$}
\affiliation{
\institution{$^1$Department of Computer Science, University of Illinois at Urbana-Champaign, IL, USA} 
\institution{$^2$College of Computing, Georgia Institute of Technology, GA, USA}
\institution{$^{1}$\{yumeng5, jiaxinh3, gwang10, zihanw2, yuz9, hanj\}@illinois.edu \ \ \ $^2$chaozhang@gatech.edu}
}
\thanks{$^*$Equal Contribution.}
\renewcommand{\shortauthors}{Meng et al.}


\begin{abstract}
Mining a set of meaningful and distinctive topics automatically from massive text corpora has broad applications.
Existing topic models, however, typically work in a purely unsupervised way, which often generate topics that do not fit users' particular needs and yield suboptimal performance on downstream tasks. 
We propose a new task, discriminative topic mining, which leverages a set of user-provided category names to mine discriminative topics from text corpora. This new task not only helps a user understand clearly and distinctively the topics he/she is most interested in, but also benefits directly keyword-driven classification tasks.
We develop \CatEm, a novel category-name guided text embedding method for discriminative topic mining, which effectively leverages minimal user guidance to learn a discriminative embedding space and discover category representative terms in an iterative manner. We conduct a comprehensive set of experiments to show that \CatEm mines high-quality set of topics guided by category names only, and benefits a variety of downstream applications including weakly-supervised classification and lexical entailment direction identification\footnote{Source code can be found at \url{https://github.com/yumeng5/CatE}.}. 
\end{abstract}

\begin{CCSXML}
<ccs2012>
<concept>
<concept_id>10002951.10003227.10003351</concept_id>
<concept_desc>Information systems~Data mining</concept_desc>
<concept_significance>500</concept_significance>
</concept>
<concept>
<concept_id>10002951.10003317.10003318.10003320</concept_id>
<concept_desc>Information systems~Document topic models</concept_desc>
<concept_significance>300</concept_significance>
</concept>
<concept>
<concept_id>10002951.10003317.10003347.10003356</concept_id>
<concept_desc>Information systems~Clustering and classification</concept_desc>
<concept_significance>300</concept_significance>
</concept>
</ccs2012>
\end{CCSXML}

\ccsdesc[500]{Information systems~Data mining}
\ccsdesc[300]{Information systems~Document topic models}
\ccsdesc[300]{Information systems~Clustering and classification}

\copyrightyear{2020}
\acmYear{2020}
\setcopyright{iw3c2w3}
\acmConference[WWW '20]{Proceedings of The Web Conference 2020}{April 20--24, 2020}{Taipei, Taiwan}
\acmBooktitle{Proceedings of The Web Conference 2020 (WWW '20), April 20--24, 2020, Taipei, Taiwan}
\acmPrice{}
\acmDOI{10.1145/3366423.3380278}
\acmISBN{978-1-4503-7023-3/20/04}

\keywords{Topic Mining, Discriminative Analysis, Text Embedding, Text Classification}

\maketitle


\section{Introduction}

To help users effectively and efficiently comprehend a large set of text documents, it is of great interest to generate a set of meaningful and coherent topics automatically from a given corpus. 
Topic models~\cite{Blei2003LatentDA,Hofmann1999ProbabilisticLS} are such unsupervised statistical tools that discover latent topics from text corpora. Due to their effectiveness in uncovering hidden semantic structure in text collections, topic models are widely used in text mining~\cite{Foster2007MixtureModelAF,Mei2007AutomaticLO} and information retrieval tasks~\cite{Dou2007ALE,Wei2006LDAbasedDM}.

Despite of their effectiveness, traditional topic models suffer from two noteworthy limitations: (1) \emph{Failure to incorporate user guidance}. Topic models tend to retrieve the most general and prominent topics from a text collection, which may not be of a user's particular interest, or provide a skewed and biased summarization of the corpus. (2) \emph{Failure to enforce distinctiveness among retrieved topics}. Concepts are most effectively interpreted via their uniquely defining features.
For example, Egypt is known for pyramids and China is known for the Great Wall. Topic models, however, do not impose disriminative constraints, resulting in vague interpretations of the retrieved topics.  
Table~\ref{tab:lda_topic}
demonstrates three retrieved topics from the New York Times (\textbf{NYT}) annotated corpus~\cite{Sandhaus2008} via LDA~\cite{Blei2003LatentDA}. 
We can see that it is difficult to clearly define the meaning of the three topics due to an overlap of their semantics (\eg, the term ``united states'' appears in all three topics).

\setlength{\tabcolsep}{3pt}
\begin{table}[h]
	\centering
	\caption{LDA retrieved topics on \textbf{NYT} dataset. The meanings of the retrieved topics have overlap with each other.}
	\vspace*{-1em}
	\label{tab:lda_topic}
	\scalebox{0.95}{
		\begin{tabular}{c|c|c}
			\toprule
			Topic 1 & Topic 2 & Topic 3  \\
			\midrule
			canada, united states & sports, united states & united states, iraq \\
			canadian, economy & olympic, games & government, president \\
			\bottomrule
		\end{tabular}
	}
	\vspace*{-1em}
\end{table}
\setlength{\tabcolsep}{5pt}

In order to incorporate user knowledge or preference into topic discovery for mining distinctive topics from a text corpus, we propose a new task, \textbf{Discriminative Topic Mining}, which takes only a set of category names as user guidance, and aims to retrieve a set of representative and discriminative terms under each provided category. 
In many cases, a user may have a specific set of interested topics in mind, or have prior knowledge about the potential topics in a corpus. 
Such user interest or prior knowledge may come naturally in the form of a set of category names that could be used to guide the topic discovery process, resulting in more desirable results that better cater to a user's need and fit specific downstream applications. 
For example, a user may provide several country names and rely on discriminative topic mining to retrieve each country's provinces, cities, currency, etc. from a text corpus. 
We will show that this new task not only helps the user to clearly and distinctively understand his/her interested topics, but also benefits keywords-driven classification tasks.

There exist previous studies that attempt to incorporate prior knowledge into topic models. Along one line of work, supervised topic models such as Supervised LDA~\cite{mcauliffe2008supervised} and DiscLDA~\cite{lacoste2009disclda} guide the model to predict category labels based on document-level training data. While they do improve the discriminative power of unsupervised topic models on classification tasks, they rely on massive hand-labeled documents, which may be difficult to obtain in practical applications. Along another line of work that is more similar to our setting, users are asked to provide a set of seed words to guide the topic discovery process, which is referred to as seed-guided topic modeling~\cite{Andrzejewski2009LatentDA,Jagarlamudi2012IncorporatingLP}. However, they still do not impose requirements on the distinctiveness of the retrieved topics and thus are not optimized for discriminative topic presentation and other applications such as keyword-driven classification. 

We develop a novel category-name guided text embedding method, \CatEm, for discriminative topic mining. 
\CatEm consists of two modules: (1) A \emph{category-name guided text embedding learning module} that takes a set of category names to learn category distinctive word embeddings by modeling the text generative process conditioned on the user provided categories, and (2) a \emph{category representative word retrieval module} that selects category representative words based on both word embedding similarity and word distributional specificity. The two modules collaborate in an iterative way: 
At each iteration, the former 
refines word embeddings and category embeddings for accurate representative word retrieval; and the latter 
selects representative words that will be used by the former at the next iteration.

Our contributions can be summarized as follows.
\begin{enumerate}
\parskip -0.2ex
\item We propose discriminative topic mining, a new task for topic discovery from text corpora with a set of category names as the only supervision. We show qualitatively and quantitatively that this new task helps users obtain a clear and distinctive understanding of interested topics, and directly benefits keyword-driven classification tasks.

\item We develop a category-name guided text embedding framework for discriminative topic mining by modeling the text generation process. The model effectively learns a category distinctive embedding space that best separates the given set of categories based on word-level supervision.

\item We propose an unsupervised method that jointly learns word embedding and word distributional specificity, which allow us to consider both relatedness and specificity when retrieving category representative terms. We also provide theoretical interpretations of the model.

\item We conduct a comprehensive set of experiments on a variety of tasks including topic mining, weakly-supervised classification and lexical entailment direction identification to demonstrate the effectiveness of our model on these tasks.

\end{enumerate}


\section{Problem Formulation}
\begin{definition} [Discriminative Topic Mining]
Given a text corpus $\mathcal{D}$ and a set of category names $\mathcal{C}=\{c_1, \dots, c_n\}$, discriminative topic mining aims to retrieve a set of terms $\mathcal{S}_i=\{w_1, \dots, w_m\}$ from $\mathcal{D}$ for each category $c_i$ such that each term in $\mathcal{S}_i$ semantically belongs to and only belongs to category $c_i$.
\end{definition}
\begin{example}
Given a set of country names, $c_1$: ``The United States'', $c_2$: ``France'' and $c_3$: ``Canada'', it is correct to retrieve ``Ontario'' as an element in $\mathcal{S}_3$, because Ontario is a province in Canada and exclusively belongs to Canada semantically. However, it is incorrect to retrieve ``North America'' as an element in $\mathcal{S}_3$, because North America is a continent and does not belong to any countries. It is also incorrect to retrieve ``English'' as an element in $\mathcal{S}_3$, because English is also the national language of the United States.
\end{example}

The differences between discriminative topic mining and standard topic modeling are mainly two-fold: (1) Discriminative topic mining requires a set of user provided category names and only focuses on retrieving terms belonging to the given categories. (2) Discriminative topic mining imposes strong discriminative requirements that each retrieved term under the corresponding category must belong to and only belong to that category semantically.


\section{Category-Name Guided Embedding}

In this section, we first formulate a text generative process under user guidance, and then cast the learning of the generative process as a category-name guided text embedding model. Words, documents and categories are jointly embedded into a shared space where embeddings are not only learned according to the corpus generative assumption, but also encouraged to incorporate category distinctive information.

\subsection{Motivation}

Traditional topic models like LDA~\cite{Blei2003LatentDA} use document-topic and topic-word distributions to model the text generation process, where an obvious defect exists due to the bag-of-words generation assumption---each word is drawn independently from the topic-word distribution without considering the correlations between adjacent words. In addition, topic models make explicit probabilistic assumptions regarding the text generation mechanism, resulting in high model complexity and inflexibility~\cite{Gallagher2017AnchoredCE}. 

Along another line of text representation research, word embeddings like Word2Vec~\cite{Mikolov2013DistributedRO} effectively capture word semantic correlations by mapping words with similar \emph{local contexts} closer in the embedding space. They do not impose particular assumptions on the type of data distribution of the corpus and enjoy greater flexibility and higher efficiency. However, word embeddings usually do not exploit document-level co-occurrences of words (\ie, \emph{global contexts}) and also cannot naturally incorporate latent topics into the model without making topic-relevant generative assumptions.

To take advantage of both lines of work for mining topics from text corpora, we propose to explicitly model the text generation process and cast it as an embedding learning problem.

\subsection{Modeling Text Generation Under User Guidance}

When the user provides $n$ category names, we assume text generation is a three-step process: (1) First, a document $d$ is generated conditioned on one of the $n$ categories (this is similar to the assumption in multi-class classification problems where each document belongs to exactly one of the categories); (2) second, each word $w_i$ is generated conditioned on the semantics of the document $d$; and (3) third, surrounding words $w_{i+j}$ in the local context window ($-h \le j \le h, j \neq 0$, $h$ is the local context window size) of $w_i$ are generated conditioned on the semantics of the center word $w_i$. Step (1) explicitly models the associations between each document and user-interested categories (\ie, \emph{topic assignment}). Step (2) makes sure each word is generated in consistency with the semantics of its belonging document (\ie, \emph{global contexts}). Step (3) models the correlations of adjacent words in the corpus (\ie, \emph{local contexts}). Putting the above pieces together, we have the following expression for the likelihood of corpus generation conditioned on a specific set of user-interested categories $\mathcal{C}$:
\begin{align}
\label{eq:gen}
P(\mathcal{D} \mid \mathcal{C}) &= \prod_{d\in \mathcal{D}} p(d \mid c_d) \prod_{w_i \in d} p(w_i \mid d) \prod_{\substack{w_{i+j} \in d \\ -h \le j \le h, j \neq 0}} p(w_{i+j} \mid w_i)
\vspace*{-1em}
\end{align}
where $c_d$ is the latent category of $d$. 

Taking the negative log-likelihood as our objective $\mathcal{L}$, we have
\begin{align}
\label{eq:obj}
\begin{split}
\mathcal{L} &= -\sum_{d\in \mathcal{D}} \log p(d \mid c_d) \qquad (\mathcal{L}_{\text{topic}})\\
& \quad - \sum_{d\in \mathcal{D}} \sum_{w_i \in d} \log p(w_i \mid d) \qquad (\mathcal{L}_{\text{global}})\\
& \quad - \sum_{d\in \mathcal{D}} \sum_{w_i \in d} \sum_{\substack{w_{i+j} \in d \\ -h \le j \le h, j \neq 0}} \log p(w_{i+j} \mid w_i). \qquad (\mathcal{L}_{\text{local}})\\
\end{split}
\end{align}

In Eq.~\eqref{eq:obj}, $p(w_i \mid d)$ and $p(w_{i+j} \mid w_i)$ are observable (\eg, $p(w_i \mid d) = 1$ if $w_i$ appears in $d$, and $p(w_i \mid d) = 0$ otherwise), while $p(d \mid c_d)$ is latent (\ie, we do not know which category $d$ belongs to). To directly leverage the word level user supervisions (\ie, category names), a natural solution is to decompose $p(d \mid c_d)$ into word-topic distributions:
\begin{align*}
p(d \mid c_d) &\propto p(c_d \mid d) p(d) \propto p(c_d \mid d) \propto \prod_{w \in d} p(c_d \mid w),
\end{align*}
where the first proportionality is derived via Bayes rule; the second derived assuming $p(d)$ is constant; and the third assumes $p(c_d \mid d)$ is jointly decided by all words in $d$.

Next, we rewrite the first term in Eq.~\eqref{eq:obj} (\ie, $\mathcal{L}_{\text{topic}}$) by reorganizing the summation over categories instead of documents:
$$
\mathcal{L}_{\text{topic}} = -\sum_{d\in \mathcal{D}} \log p(d \mid c_d) = -\sum_{c \in \mathcal{C}} \sum_{w \in c} p(c \mid w) + \text{const.}
$$

Now $\mathcal{L}_{\text{topic}}$ is expressed in $p(c \mid w)$, the category assignment of words. This is exactly the task we aim for---finding words that belong to the  categories.

\subsection{Embedding Learning}

In this subsection, we introduce how to formulate the optimization of the objective in Eq.~\eqref{eq:obj} as an embedding learning problem.

Similar to previous work~\cite{Mikolov2013DistributedRO,Bojanowski2017EnrichingWV}, we define the three probability expressions in Eq.~\eqref{eq:obj} via log-linear models in the embedding space:
\begin{align}
\label{eq:distr}
p(c_i \mid w) &= \frac{\exp(\bs{c}_{i}^\top \bs{u}_{w})}{\sum_{c_{j}\in \mathcal{C}} \exp(\bs{c}_{j}^\top \bs{u}_{w})},
\end{align}
\begin{equation}
\label{eq:sg_d}
p(w_{i} \mid d) = \frac{\exp(\bs{u}_{w_{i}} ^\top \bs{d})}{\sum_{d' \in \mathcal{D}}\exp(\bs{u}_{w_i}^\top \bs{d'})},
\end{equation}
\begin{equation}
\label{eq:sg_w}
p(w_{i+j} \mid w_i) = \frac{\exp(\bs{u}_{w_i}^\top \bs{v}_{w_{i+j}})}{\sum_{w' \in V}\exp(\bs{u}_{w_i}^\top \bs{v}_{w'})},
\end{equation}
where $\bs{u}_{w}$ is the input vector representation of $w$ (usually used as the word embedding); $\bs{v}_{w}$ is the output vector representation that serves as $w$'s contextual representation; $\bs{d}$ is the document embedding; $\bs{c}_{i}$ is the category embedding. Please note that Eqs.~\eqref{eq:sg_d} and \eqref{eq:sg_w} are not yet the final design of our embedding model, as we will propose an extension of them in Section~\ref{sec:joint} that leads to a more effective and suitable model for discriminative topic mining.

While Eqs.~\eqref{eq:sg_d} and \eqref{eq:sg_w} can be directly plugged into Eq.~\eqref{eq:obj} to train word and document embeddings, Eq.~\eqref{eq:distr} requires knowledge about the latent topic (\ie, the category that $w$ belongs to) of a word $w$.
Initially, we only know the user-provided category names belong to their corresponding categories, but during the iterative topic mining process, we will retrieve more terms under each category, gradually discovering the latent topic of more words.

To this end, we design the following for learning $\mathcal{L}_{\text{topic}}$ in Eq.~\eqref{eq:obj}: Let $\bs{p}_{w}=\begin{bmatrix} p(c_1 \mid w) & \dots & p(c_n \mid w) \end{bmatrix}^\top$ be the probability distribution of $w$ over all classes. If a word $w$ is known to belong to class $c_i$, $\bs{p}_{w}$ computed from Eq.~\eqref{eq:distr} should become a one-hot vector $\bs{l}_{w}$ (\ie, the category label of $w$) with $p(c_i \mid w)=1$. To achieve this property, we minimize the KL divergence from each category representative word's distribution $\bs{p}_{w}$ to its corresponding discrete delta distribution $\bs{l}_{w}$. Formally, given a set of class representative words $\mathcal{S}_i$ (we will introduce how to retrieve $\mathcal{S}_i$ in Section \ref{sec:concen}) for category $c_i$, the $\mathcal{L}_{\text{topic}}$ term is implemented as:

\begin{align}
\label{eq:reg}
\mathcal{L}_{\text{topic}} &= \sum_{c_i \in \mathcal{C}} \sum_{w \in \mathcal{S}_i} KL\left(\bs{l}_{w} \middle\| \bs{p}_{w} \right).
\end{align}

From the embedding learning perspective, Eq.~\eqref{eq:reg} is equivalent to a cross-entropy regularization loss, encouraging the category embeddings to become distinctive anchor points in the embedding space that are far from each other and are surrounded by their current retrieved class representative terms.


\section{Category Representative Word Retrieval}
\label{sec:concen}
In this section, we detail how to retrieve category representative words (\ie, the words that belong to and only belong to a category) for topic mining.

As a starting point, we propose to retrieve category representative terms by jointly considering two separate aspects: Relatedness and specificity. In particular, a representative word $w$ of category $c$ should satisfy simultaneously two constraints: (1) $w$ is semantically related to $c$, and (2) $w$ is semantically more specific than the category name of $c$. Constraint (1) can be imposed by simply requiring high cosine similarity between a candidate word embedding and the category embedding. However, constraint (2) is not naturally captured by the text embedding space. Hence, we are motivated to improve the previous text embedding model by incorporating word specificity signals. 

In the following, we first present the concept of word distributional specificity, and then introduce how to capture the signal effectively in our model. Finally, we describe how to retrieve category representative words by jointly considering the two constraints.

\subsection{Word Distributional Specificity}
We adapt the concept of distributional generality in \cite{Weeds2004CharacterisingMO} and define word distributional specificity as below.
\begin{definition}[Word Distributional Specificity]
We assume there is a scalar $\kappa_w \ge 0$ correlated with each word $w$ indicating how specific the word meaning is. The bigger $\kappa_w$ is, the more specific meaning word $w$ has, and the less varying contexts $w$ appears in.
\end{definition}
The above definition is grounded on the distributional inclusion hypothesis~\cite{ZhitomirskyGeffet2005TheDI} which states that hyponyms are expected to occur in a subset of the contexts of their hypernyms.

For example, ``\textit{seafood}'' has a higher word distributional specificity than ``\textit{food}'', because seafood is a specific type of food.

\subsection{Jointly Learning Word Embedding and Distributional Specificity}
\label{sec:joint}
In this subsection, we propose an extension of Eqs.~\eqref{eq:sg_d} and \eqref{eq:sg_w} to jointly learn word embedding and word distributional specificity in an \emph{unsupervised} way.

Specifically, we modify Eqs.~\eqref{eq:sg_d} and \eqref{eq:sg_w} to incorporate an additional learnable scalar $\kappa_{w}$ for each word $w$, while constraining the embeddings to be on the unit hyper-sphere $\mathbb{S}^{p-1} \subset \mathbb{R}^{p}$, motivated by the fact that directional similarity is more effective in capturing semantics~\cite{meng2019spherical}.

Formally, we re-define the probability expressions in Eqs.~\eqref{eq:sg_d} and \eqref{eq:sg_w} to be\footnote{Eq.~\eqref{eq:distr} is not refined with the $\kappa$ parameter because we do not aim to learn category specificity.}:
\begin{equation}
\label{eq:pg}
p(w_i \mid d) = \frac{\exp(\kappa_{w_i}\bs{u}_{w_i}^\top \bs{d})}{\sum_{d' \in \mathcal{D}}\exp(\kappa_{w_i}\bs{u}_{w_i}^\top \bs{d'})},
\end{equation}
\begin{equation}
\label{eq:pl}
p(w_{i+j} \mid w_i) = \frac{\exp(\kappa_{w_i}\bs{u}_{w_i}^\top \bs{v}_{w_{i+j}})}{\sum_{w' \in V}\exp(\kappa_{w_i}\bs{u}_{w_i}^\top \bs{v}_{w'})},
\end{equation}
$$
s.t. \quad \forall w, d, c, \quad \|\bs{u}_{w}\|=\|\bs{v}_{w}\|=\|\bs{d}\|=\|\bs{c}\|=1.
$$
In practice, the unit norm constraints can be satisfied by simply normalizing the embedding vectors after each update\footnote{Alternatively, one may apply the Riemannian optimization techniques in the spherical space as described in ~\cite{meng2019spherical}.}. Under the above setting, the parameter $\kappa_{w}$ learned is the distributional specificity of $w$.

\subsection{Explaining the Model}
\label{sec:explain}

We explain here why the additional parameter $\kappa_{w}$ in Eqs.~\eqref{eq:pg} and \eqref{eq:pl} effectively captures word distributional specificity.
We first introduce a spherical distribution, and then show how our model is connected to the properties of the distribution.

\begin{definition} [The von Mises Fisher (vMF) distribution]
A unit random vector $\bs{x} \in \mathbb{S}^{p-1} \subset \mathbb{R}^{p}$
has the $p$-variate von Mises Fisher distribution $vMF_p(\bs{\mu},\kappa)$ if its probability dense function is

\begin{equation*}
\label{eq:vmf}
f(\bs{x};\bs{\mu},\kappa) = c_p(\kappa)\exp (\kappa\bs{\mu}^\top\bs{x}),
\end{equation*}
where $\kappa \ge 0$ is the concentration parameter, $\|\bs{\mu}\| = 1$ is the mean direction, and the normalization constant $c_p(\kappa)$ is given by

$$
c_p(\kappa) = \frac{\kappa^{p/2-1}}{(2\pi)^{p/2} I_{p/2-1}(\kappa)},
$$
where $I_r(\cdot)$ represents the modified Bessel function of the first kind at order $r$.
\end{definition}

\begin{theorem}
\label{thm:main}
When the corpus size and vocabulary size are infinite (\ie, $|\mathcal{D}| \to \infty$ and $|V| \to \infty$) and all $p$-dimensional word vectors and document vectors are unit vectors, generalizing Eqs.~\eqref{eq:pg} and \eqref{eq:pl} to the continuous cases results in the $p$-variate vMF distribution with the center word vector $\bs{u}_{w_i}$ as the mean direction and $\kappa_{w_i}$ as the concentration parameter, \ie,

\begin{equation}
\label{eq:vmf_w}
\lim_{|V| \to \infty} p(w_{i+j} \mid w_i) = c_p(\kappa_{w_i})\exp (\kappa_{w_i} \bs{u}_{w_i}^\top \bs{v}_{w_{i+j}}),
\end{equation}

\begin{equation}
\label{eq:vmf_d}
\lim_{|V| \to \infty} p(w_i \mid d) = c_p(\kappa_{w_i})\exp (\kappa_{w_i} \bs{u}_{w_i}^\top \bs{d}).
\end{equation}

\end{theorem}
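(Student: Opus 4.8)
The plan is to read the phrase ``generalizing Eqs.~\eqref{eq:pg} and~\eqref{eq:pl} to the continuous cases'' as replacing the finite normalizing sums by surface integrals over $\mathbb{S}^{p-1}$, which is legitimate once we make explicit the (implicit) hypothesis that, as $|V|\to\infty$ (resp.\ $|\mathcal{D}|\to\infty$), the empirical distribution of the learned output vectors $\{\bs{v}_{w'}\}_{w'\in V}$ (resp.\ the document vectors $\{\bs{d}'\}_{d'\in\mathcal{D}}$) converges weakly to the uniform distribution on the sphere. Under this hypothesis I would first re-express the softmax in Eq.~\eqref{eq:pl} as the probability that center word $w_i$ emits a context word whose output vector lands in a measurable continuity set $B\subseteq\mathbb{S}^{p-1}$, namely the ratio of $\sum_{w':\,\bs{v}_{w'}\in B}\exp(\kappa_{w_i}\bs{u}_{w_i}^\top\bs{v}_{w'})$ to $\sum_{w'\in V}\exp(\kappa_{w_i}\bs{u}_{w_i}^\top\bs{v}_{w'})$, and then divide numerator and denominator by $|V|$ so that both become empirical averages of the bounded continuous function $\bs{x}\mapsto\exp(\kappa_{w_i}\bs{u}_{w_i}^\top\bs{x})$ against the empirical measure.

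The key step is passing to the limit in these two averages. Since the integrand is continuous and bounded on the compact manifold $\mathbb{S}^{p-1}$, weak convergence of the empirical measures to $\mathrm{Unif}(\mathbb{S}^{p-1})$ (which has density $1/A_{p-1}$ against the surface measure $\sigma$, where $A_{p-1}$ is the surface area) gives
\[
\frac{1}{|V|}\sum_{w'\in V}\exp(\kappa_{w_i}\bs{u}_{w_i}^\top\bs{v}_{w'})\;\longrightarrow\;\frac{1}{A_{p-1}}\int_{\mathbb{S}^{p-1}}\exp(\kappa_{w_i}\bs{u}_{w_i}^\top\bs{x})\,d\sigma(\bs{x}),
\]
and likewise for the sum restricted to $B$. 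I would then evaluate the full-sphere integral via the normalization identity for the vMF density: because $c_p(\kappa)\exp(\kappa\bs{\mu}^\top\bs{x})$ integrates to $1$ over $\mathbb{S}^{p-1}$, we have $\int_{\mathbb{S}^{p-1}}\exp(\kappa_{w_i}\bs{u}_{w_i}^\top\bs{x})\,d\sigma(\bs{x})=1/c_p(\kappa_{w_i})$. Dividing, the $A_{p-1}$ factors cancel and the limiting probability of $B$ equals $c_p(\kappa_{w_i})\int_B\exp(\kappa_{w_i}\bs{u}_{w_i}^\top\bs{x})\,d\sigma(\bs{x})$, so the density induced on the context-word output vector is exactly $c_p(\kappa_{w_i})\exp(\kappa_{w_i}\bs{u}_{w_i}^\top\bs{x})$, i.e.\ $vMF_p(\bs{u}_{w_i},\kappa_{w_i})$; specializing $\bs{x}=\bs{v}_{w_{i+j}}$ yields Eq.~\eqref{eq:vmf_w}. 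The derivation of Eq.~\eqref{eq:vmf_d} is verbatim with $\{\bs{d}'\}_{d'\in\mathcal{D}}$ and $|\mathcal{D}|\to\infty$ replacing the output vectors, since $\bs{d}$ enters Eq.~\eqref{eq:pg} in precisely the role that $\bs{v}_{w_{i+j}}$ plays in Eq.~\eqref{eq:pl}.

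I expect the main obstacle to be definitional rather than computational: pinning down what ``the continuous case'' means — isolating the equidistribution hypothesis on the learned output/document embeddings and the mode of convergence (weak convergence of empirical measures, equivalently viewing the normalized sums as quadrature rules on the sphere), and correctly converting a probability mass function on a finite vocabulary into a probability density on $\mathbb{S}^{p-1}$ while tracking the surface-area factor that cancels between numerator and denominator. Once that scaffolding is fixed, the analytic content reduces to the boundedness and continuity of $\exp(\kappa\,\bs{u}^\top\bs{x})$ on a compact manifold and the single closed-form integral $\int_{\mathbb{S}^{p-1}}\exp(\kappa\bs{\mu}^\top\bs{x})\,d\sigma(\bs{x})=1/c_p(\kappa)$, both standard; as a consistency check I would also note that letting $\kappa_{w_i}\to 0$ degenerates to the uniform density $c_p(0)=1/A_{p-1}$, matching the original softmax when all the relevant inner products coincide.
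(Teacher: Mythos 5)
Your proposal is correct and follows essentially the same route as the paper's proof: replace the finite softmax denominator by the surface integral $\int_{\mathbb{S}^{p-1}}\exp(\kappa_{w_i}\bs{u}_{w_i}^\top\bs{v}_{w'})\,d\bs{v}_{w'}$ and evaluate it as $1/c_p(\kappa_{w_i})$ via the vMF normalization identity. The only difference is that you make explicit the equidistribution hypothesis on the output/document vectors needed to justify the sum-to-integral passage, which the paper simply asserts as ``generalizing to the continuous case.''
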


\begin{proof}
We give the proof for Eq.~\eqref{eq:vmf_w}. The proof for Eq.~\eqref{eq:vmf_d} can be derived similarly.

We generalize the relationship proportionality $p(w_{i+j} \mid w_i) \propto \exp (\kappa_{w_i}\bs{u}_{w_i}^\top\bs{v}_{w_{i+j}})$ in Eq.\ (\ref{eq:pl}) to the continuous case and obtain the following probability density distribution:

\begin{align}
\label{eq:claim}
\begin{split}
\lim_{|V| \to \infty} p(w_{i+j} \mid w_i) &= \frac{\exp(\kappa_{w_i}\bs{u}_{w_i}^\top \bs{v}_{w_{i+j}})}{\int_{\mathbb{S}^{p-1}}\exp(\kappa_{w_i}\bs{u}_{w_i}^\top \bs{v}_{w'}) d \bs{v}_{w'}} \\
& \triangleq \frac{\exp(\kappa_{w_i}\bs{u}_{w_i}^\top \bs{v}_{w_{i+j}})}{Z},
\end{split}
\end{align}
where $Z$ denotes the integral in the denominator.

The probability density function of vMF distribution integrates to $1$ over the entire sphere, \ie, 

$$
\int_{\mathbb{S}^{p-1}} c_p(\kappa_{w_i}) \exp(\kappa_{w_i}\bs{u}_{w_i}^\top \bs{v}_{w'}) d \bs{v}_{w'} = 1,
$$

we have 
$$
Z = \int_{\mathbb{S}^{p-1}}\exp(\kappa_{w_i}\bs{u}_{w_i}^\top \bs{v}_{w'}) d \bs{v}_{w'} = \frac{1}{c_p(\kappa_{w_i})}.
$$

Plugging $Z$ back to Eq.\ (\ref{eq:claim}), we obtain
\begin{equation*}
\lim_{|V| \to \infty} p(w_{i+j} \mid w_i) = c_p(\kappa_{w_i})\exp (\kappa_{w_i} \bs{u}_{w_i}^\top \bs{v}_{w_{i+j}}).
\end{equation*}

\vspace*{-1em}
\end{proof}

Theorem \ref{thm:main} reveals the underlying generative assumption of the joint learning model defined in Section~\ref{sec:joint}---the contexts vectors are assumed to be generated from the vMF distribution with the center word vector $\bs{u}_{w_i}$ as the mean direction and $\kappa_{w_i}$ as the concentration parameter. Our model essentially learns both word embedding and word distributional specificity that maximize the probability of the context vectors getting generated by the center word's vMF distribution.
\begin{figure}[ht]
\centering
\includegraphics[width=0.95\linewidth]{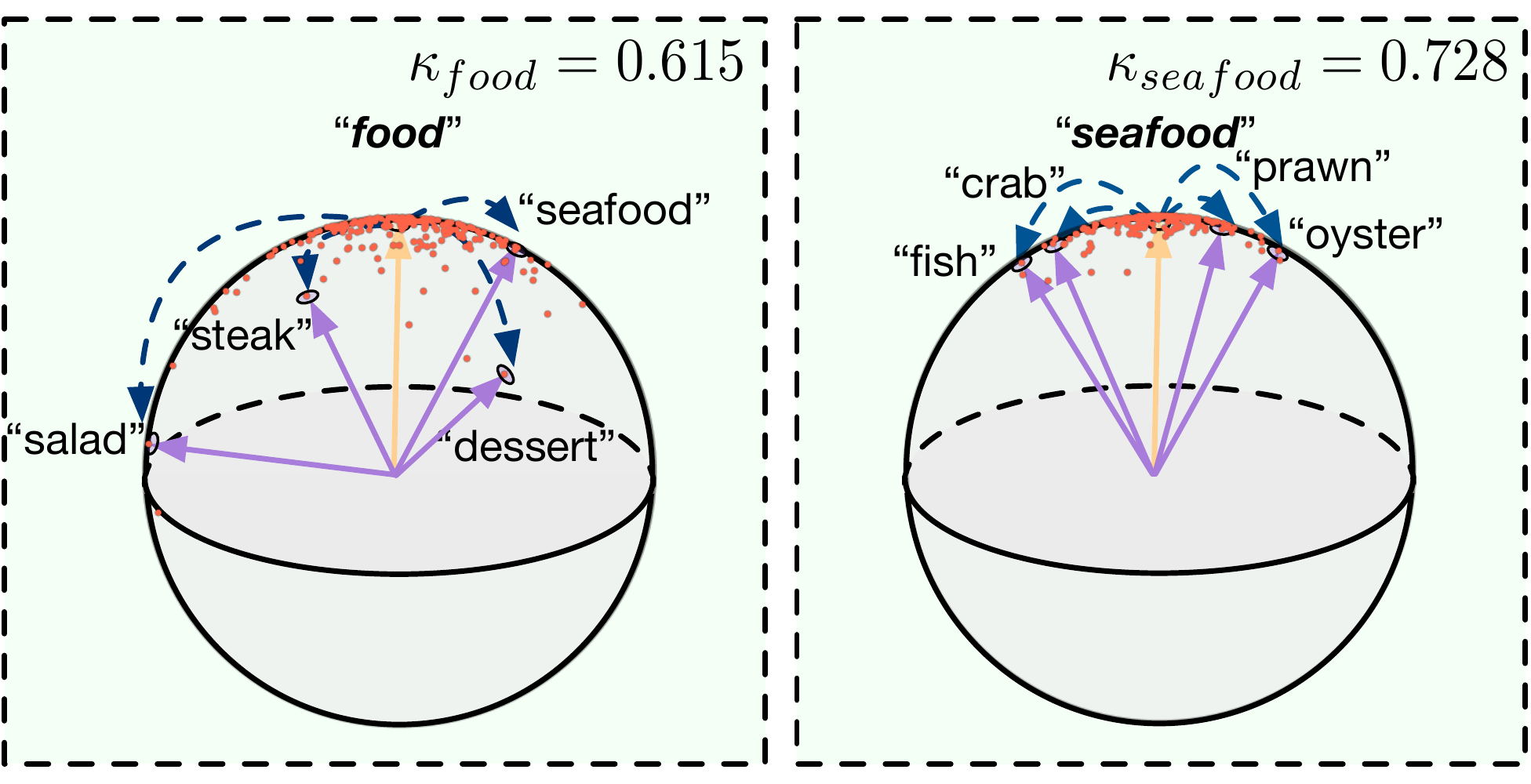}
\vspace*{-1em}
\caption{Word Distributional Specificity.}
\label{fig:vmf}
\vspace*{-1em}
\end{figure}
Figure \ref{fig:vmf} shows two words with different distributional specificity. ``\textit{Food}'' has more general meaning than ``\textit{seafood}'' and appears in more diverse contexts. Therefore, the learned vMF distribution of ``\textit{food}'' will have a lower concentration parameter than that of ``\textit{seafood}''. In other words, ``\textit{food}'' has a lower distributional specificity than ``\textit{seafood}''.

\subsection{Selecting Category Representative Words}
Finally, the learned distributional specificity can be used to impose the constraint that class representative words should belong to the category. Specifically, a category representative word must have higher distributional specificity than the category name.
However, we also want to avoid selecting too specific terms as category representative words.
From the embedding learning perspective, words with higher semantic specificity may appear fewer times in the corpus and suffer from lower embedding quality and higher variance due to insufficient training, which can lead to the distortion of the category embedding manifold if they are selected as category representative words.

Therefore, among all the words that are more specific than the category name, we prefer words that (1) have high embedding cosine similarity with the category name, and (2) have low distributional specificity, which indicates wider semantic coverage. Formally, we find a representative word of category $c_i$ and add it to the set $\mathcal{S}$ by

\begin{equation}
\label{eq:select}
\begin{gathered}
w = \argmin_w \text{rank}_{sim}(w, c_i) \cdot \text{rank}_{spec}(w)\\
s.t. \quad w \notin \mathcal{S} \quad \text{and} \quad \kappa_w > \kappa_{c_i},
\end{gathered}
\end{equation}
where $\text{rank}_{sim}(w, c_i)$ is the ranking of $w$ by embedding cosine similarity with category $c_i$, \ie, $\cos(\bs{u}_{w}, \bs{c}_{i})$, from high to low; $\text{rank}_{spec}(w)$ is the ranking of $w$ by distributional specificity, \ie, $\kappa_w$, from low to high.


\subsection{Overall Algorithm}

We summarize the overall algorithm of discriminative topic mining in Algorithm~\ref{alg:train}.
\begin{algorithm}[h]
\caption{Discriminative Topic Mining.}
\label{alg:train}
\KwIn{
A text corpus $\mathcal{D}$; a set of category names $\mathcal{C} = \{c_{i}\}|_{i=1}^{n}$.
}
\KwOut{Discriminative topic mining results $\mathcal{S}_i|_{i=1}^{n}$.}

\For{$i \gets 1$ to $n$} {
$\mathcal{S}_i \gets \{c_{i}\}$\Comment{initialize $\mathcal{S}_i$ with category names}\;
}
\For{$t \gets 1$ to $max\_iter$}  {
Train $\bs{\mathcal{W}}, \bs{\mathcal{C}}$ on $\mathcal{D}$ according to Equation (\ref{eq:obj})\;
\For{$i \gets 1$ to $n$} {
$w \gets$ Select representative word of $c_i$ by Eq.~\eqref{eq:select}\;
$\mathcal{S}_i \gets \mathcal{S}_i \cup \{w\}$\;
}
}
\For{$i \gets 1$ to $n$} {
$\mathcal{S}_i \gets \mathcal{S}_i \setminus \{c_{i}\}$\Comment{exclude category names}\;
}
Return $\mathcal{S}_i|_{i=1}^{n}$\;
\end{algorithm}

Initially, the set of class representative words $\mathcal{S}_i$ is simply the category name.
During training, $\mathcal{S}_i$ gradually incorporates more class representative words so that the category embedding models more accurate and complete class semantics. The embeddings of class representative words are directly enforced by Eq.\ (\ref{eq:reg}) to encode category distinctive information, and this weak supervision signal will pass to other words through Eqs.\ (\ref{eq:pg}) and (\ref{eq:pl}) so that the resulting embedding space is specifically fine-tuned to distinguish the given set of categories.


\section{Experiments}



\subsection{Experiment Setup}\label{setup}

\textbf{Datasets.}
We use two datasets, the New York Times annotated corpus (\textbf{NYT}) \cite{Sandhaus2008}, the recently released \textit{Yelp Dataset Challenge} (\textbf{Yelp})\footnote{https://www.yelp.com/dataset/challenge}. \textbf{NYT} and \textbf{Yelp} each has two sets of categories: \textbf{NYT}:  \emph{topic} and \emph{location}; \textbf{Yelp}: \emph{food type} and \emph{sentiment}.
For \textbf{NYT}, we first select the major categories (with more than $100$ documents) from topics and locations, and then collect documents that are single-labeled on both set of categories, \ie, each document has exactly one ground truth topic label and one ground truth location label. We do the same for \textbf{Yelp}. The category names and the number of documents in each category can be found in Figure \ref{fig:dataset}.

\begin{figure}[t]
\vspace{-0.3cm}
\subfigure{
\label{fig:nyt_topic}
\includegraphics[width = 0.22\textwidth]{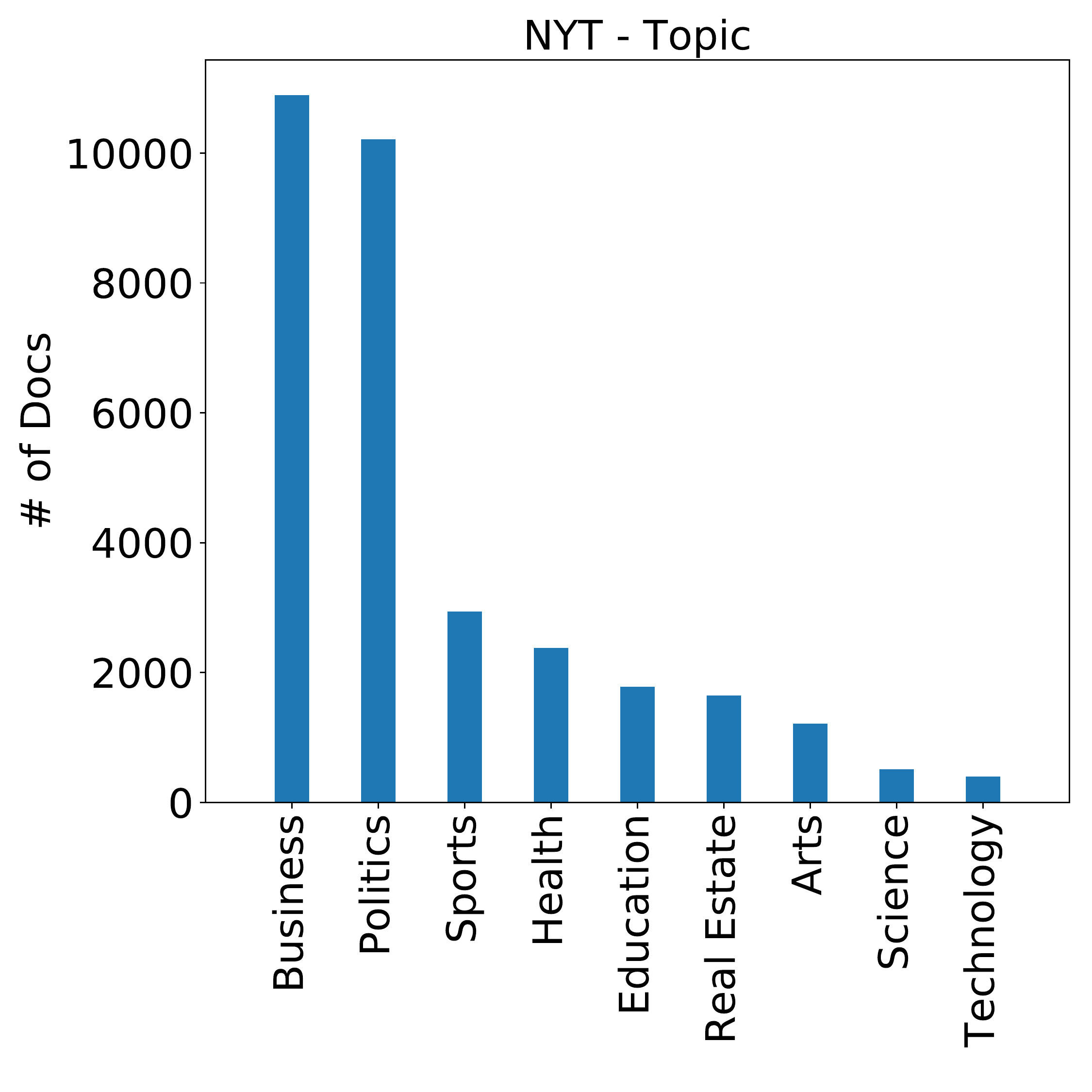}
}
\subfigure{
\label{fig:nyt_loc}
\includegraphics[width = 0.22\textwidth]{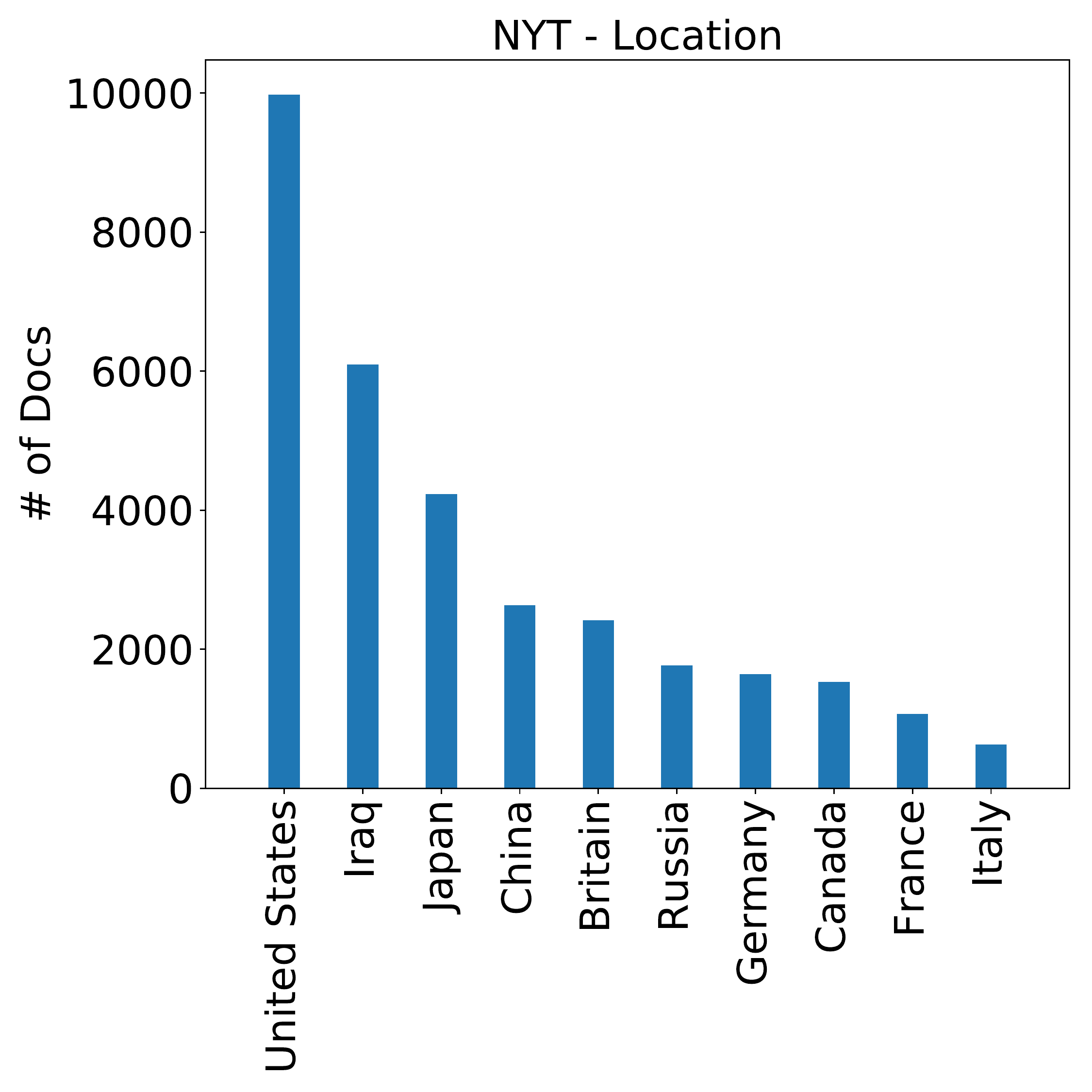}
}

\subfigure{
\label{fig:yelp_food}
\includegraphics[width = 0.22\textwidth]{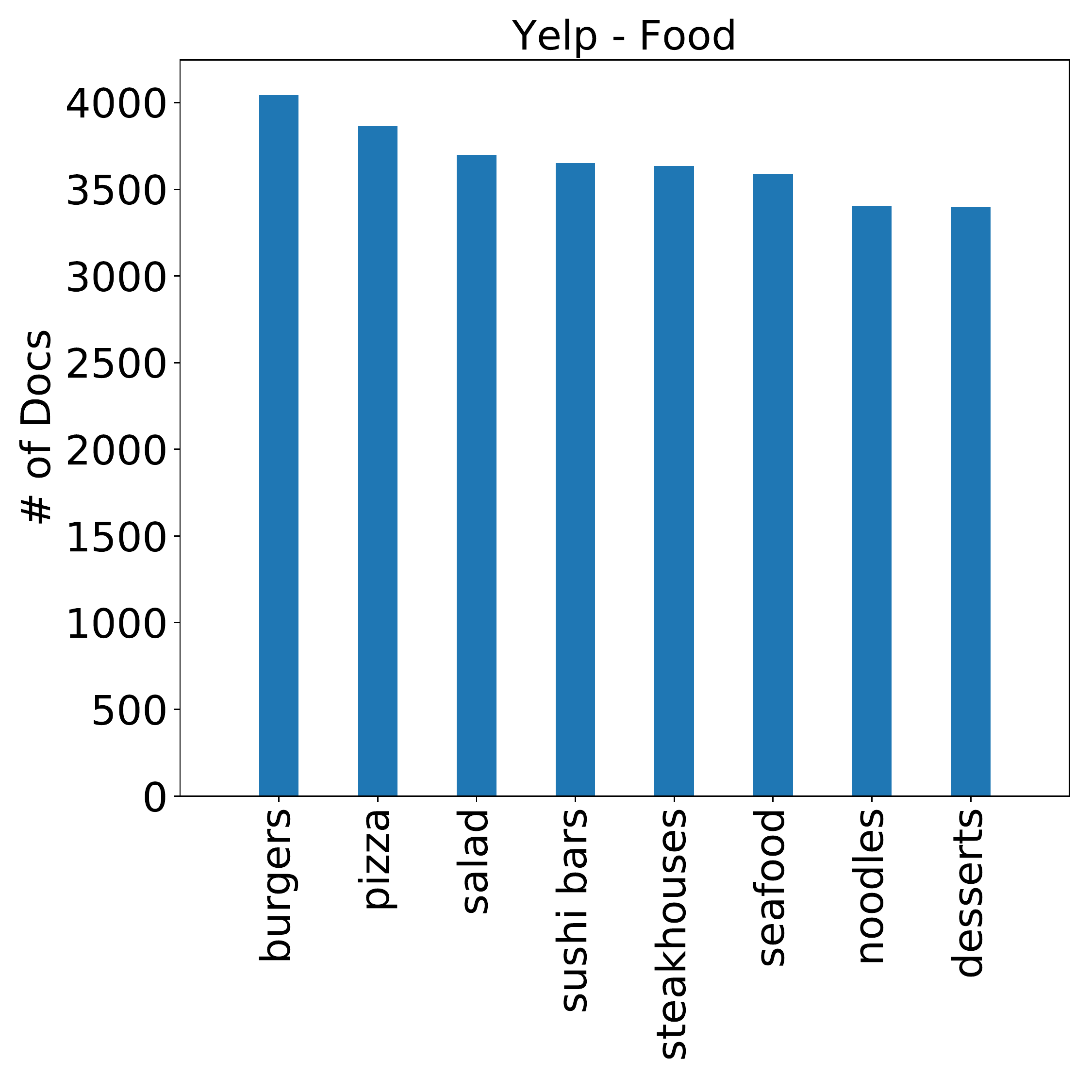}
}
\subfigure{
\label{fig:yelp_senti}
\includegraphics[width = 0.22\textwidth]{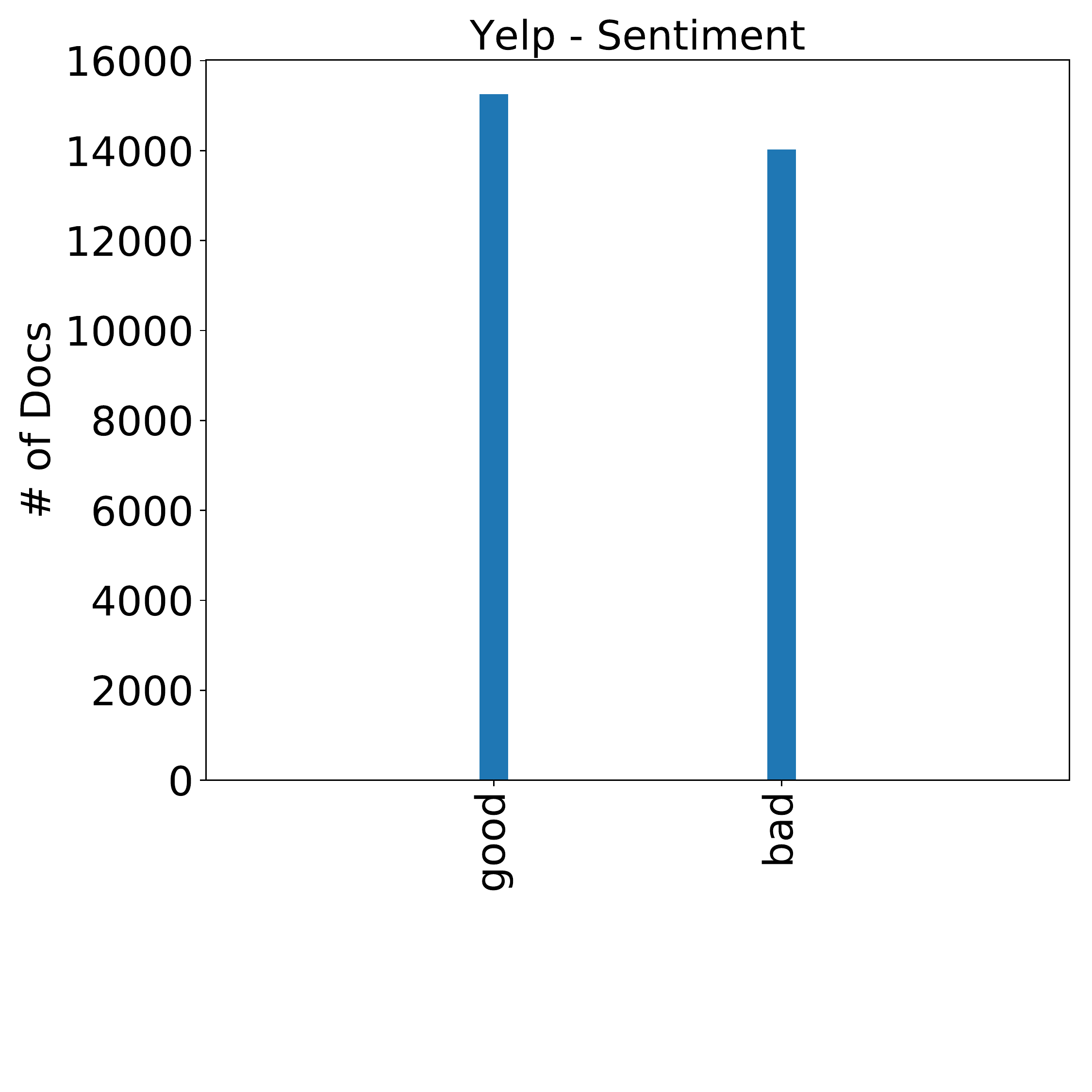}
}
\vspace*{-1em}
\caption{Dataset statistics.}
\label{fig:dataset}
\vspace*{-1.5em}
\end{figure}

\noindent
\textbf{Implementation Details and Parameters.}
Since the full softmax in Eqs.~\eqref{eq:pg} and \eqref{eq:pl} results in computational complexity proportional to the vocabulary size, we adopt the negative sampling strategy \cite{Mikolov2013DistributedRO} for efficient approximation. The training objective (Eq.~\eqref{eq:obj}) is optimized with SGD.
We pre-process the corpus by discarding infrequent words that appear less than $5$ times in the corpus. We use AutoPhrase~\cite{Shang2018AutomatedPM} to extract quality phrases, and the phrases are treated as single words during embedding training.
For fair comparisons with baselines, we set the hyperparameters as below for all methods: word embedding dimension $p=100$, local context window size $h=5$, number of negative samples $k=5$, training iterations on the corpus $max\_iter=10$. Other parameters (if any) are set to be the default values of the corresponding algorithm. In \CatEm, the word distributional specificity parameter $\kappa_w$ is initialized to $1$ for each word.

\subsection{Discriminative Topic Mining}

\noindent
\textbf{Compared Methods.} We compare \CatEm with the following baselines including traditional topic modeling, seed-guided topic modeling and embedding-based topic modeling. For all the baseline methods that require the number of topics $N$ as input, we vary $N$ in $[n, 2n, \dots, 10n]$ where $n$ is the actual number of categories, and report the best performance of the method. Note that our method \CatEm does not require any tuning of $N$ and directly uses the provided category names as the only supervision.
\begin{itemize}[leftmargin=*] 
\item LDA~\cite{Blei2003LatentDA}: LDA is the standard topic model that learns topic-word and document-topic distributions by modeling the generative process of the corpus. It is unsupervised and cannot incorporate seed words as supervision. We manually select the most relevant topics to the provided category names.
\item Seeded LDA~\cite{Jagarlamudi2012IncorporatingLP}: Seeded LDA biases the regular topic model generative process by introducing a seed topic distribution induced by input seed words to encourage the model to focus on user-interested topics. We provide the category names as seed words.
\item TWE~\cite{Liu2015TopicalWE}: TWE
has three models for learning word embedding under a set of topics. For all three models, we use the topic specific word representation of the category names to retrieve representative phrases under each category, and report the best performance of the three models.
\item Anchored CorEx~\cite{Gallagher2017AnchoredCE}:
CorEx
does not rely on generative assumptions and learns maximally informative topics measured by total correlation. Anchored CorEx incorporates user-provided seed words by balancing between compressing the original corpus and preserving anchor words related information. We provide the category names as seed words.
\item Labeled ETM~\cite{Dieng2019TopicMI}: ETM
uses the distributed representation of word embedding to enhance the robustness of topic models to rare words. We use the labeled ETM version which is more robust to stop words. The top phrases are retrieved according to embedding similarity with the category name.
\item \CatEm: Our proposed method retrieves category representative terms according to both embedding similarity and distributional specificity, as described by Eq.~\eqref{eq:select}.

\end{itemize}

\noindent
\textbf{Evaluation Metrics.} We apply two metrics on the top-$m$ ($m=10$ in our experiments) words/phrases retrieved under each category to evaluate all methods:
\begin{itemize}[leftmargin=*] 
\item Topic coherence (TC) is a standard metric~\cite{Lau2014MachineRT} in topic modeling which measures the coherence of terms inside each topic, and is defined as the average normalized pointwise mutual information of two words randomly drawn from the same document, \ie,
$$
\text{TC} = \frac{1}{\binom{m}{2}\cdot n} \sum_{k=1}^n \sum_{i=1}^{m} \sum_{j=i+1}^{m} -\frac{\log \frac{P(w_i, w_j)}{P(w_i) P(w_j)}}{\log P(w_i, w_j)},
$$
where $P(w_i, w_j)$ is the probability of $w_i$ and $w_j$ co-occurring in a document; $P(w_i)$ is the marginal probability of $w_i$.
\item Mean accuracy (MACC) measures the proportion of retrieved top words that actually belong to the category defined by user-provided category names, \ie,
$$
\text{MACC} = \frac{1}{n} \sum_{k=1}^n \frac{1}{m} \sum_{i=1}^{m} \mathbbm{1}(w_i \in c_k),
$$
where $\mathbbm{1}(w_i \in c_k)$ is the indicator function of whether $w_i$ belongs to category $c_k$. 
We invite five graduate students to independently label whether each retrieved term belongs to the corresponding category. The final results are the averaged labeling of the five annotators. 
\end{itemize}

\noindent 
\textbf{Results.} We show both qualitative and quantitative evaluation results. We randomly select two categories from \textbf{NYT}-Location, \textbf{NYT}-Topic, \textbf{Yelp}-Food Type and \textbf{Yelp}-Sentiment respectively, and show top-5 words/phrases retrieved by all methods under each category in Table~\ref{tab:quality}. Terms that are determined by more than half of the human annotators to not belong to the corresponding category are marked with $(\times)$. We measure the topic modeling quality by TC and MACC across all categories and report the results in Table~\ref{tab:quantity}.

\setlength{\tabcolsep}{3pt}
\begin{table*}[ht]
	\small
	\centering
	\caption{Qualitative evaluation on discriminative topic mining.}
	\vspace*{-1em}
	\label{tab:quality}
	\scalebox{0.95}{
		\begin{tabular}{c|cc|cc|cc|cc}
			\toprule
			\multirow{2}{*}{Methods} &
			\multicolumn{2}{c|}{\textbf{NYT}-Location} &
			\multicolumn{2}{c|}{\textbf{NYT}-Topic} & \multicolumn{2}{c|}{\textbf{Yelp}-Food} &
			\multicolumn{2}{c}{\textbf{Yelp}-Sentiment} \\
			& britain & canada & education & politics & burger & desserts & good & bad \\
			\midrule
			
			\multirow{5}{*}{\makecell{LDA}} & \wrong{company} & \wrong{percent} & school & campaign & fatburger & ice cream & great & \wrong{valet} \\
			& \wrong{companies} & \wrong{economy} & students & clinton & \wrong{dos} & chocolate & \wrong{place} & \wrong{peter} \\
			& british & canadian & \wrong{city} & mayor & \wrong{liar} & gelato & love & \wrong{aid} \\
			& \wrong{shares} & \wrong{united states} & \wrong{state} & election & cheeseburgers & \wrong{tea} & friendly & \wrong{relief} \\
			& great britain & \wrong{trade} & schools & political & \wrong{bearing} & sweet & breakfast & rowdy\\
			\midrule
			
			\multirow{5}{*}{\makecell{Seeded\\LDA}} & british & \wrong{city} & \wrong{state} & republican & \wrong{like} & \wrong{great} & \wrong{place} & \wrong{service} \\
			& \wrong{industry} & \wrong{building} & school & political & fries & \wrong{like} & great & \wrong{did} \\
			& \wrong{deal} & \wrong{street} & students & senator & \wrong{just} & ice cream & \wrong{service} & \wrong{order} \\
			& \wrong{billion} & \wrong{buildings} & \wrong{city} & president & \wrong{great} & \wrong{delicious} & 
			\wrong{just} & \wrong{time} \\
			& \wrong{business} & \wrong{york} & \wrong{board} & democrats & \wrong{time} & \wrong{just} & \wrong{ordered} & \wrong{ordered} \\
			\midrule
			
			\multirow{5}{*}{\makecell{TWE}} & \wrong{germany} & toronto & \wrong{arts} & religion & burgers & chocolate & tasty & subpar \\
			& \wrong{spain} & \wrong{osaka} & fourth graders & race & fries & \wrong{complimentary} & decent & \wrong{positive} \\
			& \wrong{manufacturing} & \wrong{booming} & \wrong{musicians} & \wrong{attraction} & hamburger & \wrong{green tea} & \wrong{darned} & awful \\
			& \wrong{south korea} & \wrong{asia} & advisors & \wrong{era} & cheeseburger & sundae & great & crappy \\
			& \wrong{markets} & alberta & regents & \wrong{tale} & patty & whipped cream & \wrong{suffered} & \wrong{honest} \\
			\midrule
			
			\multirow{5}{*}{\makecell{Anchored\\CorEx}} & \wrong{moscow} & \wrong{sports} & \wrong{republican} & \wrong{military} & \wrong{order} & \wrong{make} & \wrong{selection} & \wrong{did}   \\
			& british & \wrong{games} &  \wrong{senator} & \wrong{war} & \wrong{know} & chocolate & \wrong{prices} & \wrong{just}  \\
			& london & \wrong{players} & \wrong{democratic} & \wrong{troops} & \wrong{called} & \wrong{people} & great & \wrong{came}   \\
			& \wrong{german} & canadian & school & \wrong{baghdad} & fries & \wrong{right} & reasonable & \wrong{asked} \\
			& \wrong{russian} & coach & schools & \wrong{iraq} & \wrong{going} & \wrong{want} & \wrong{mac} & \wrong{table}  \\
			\midrule
			
			\multirow{5}{*}{\makecell{Labeled\\ETM}} & \wrong{france} & canadian & higher education & political & hamburger & pana & decent & horrible \\
			& \wrong{germany} & british columbia & educational & \wrong{expediency} & cheeseburger & gelato & great & terrible \\
			& \wrong{canada} & \wrong{britain} & school & \wrong{perceptions} & burgers & tiramisu & tasty & \wrong{good} \\
			& british & quebec & schools & foreign affairs & patty & cheesecake & \wrong{bad} & awful \\
			& \wrong{europe} & \wrong{north america} & regents & ideology & \wrong{steak} & ice cream & delicious & appallingly \\
			\midrule
			
			\multirow{5}{*}{\makecell{\CatEm}} &england & ontario & educational & political & burgers & dessert & delicious & sickening \\
			&london & toronto&  schools & international politics & cheeseburger & pastries & mindful & nasty\\
			&britons & quebec & higher education & liberalism & hamburger & cheesecakes & excellent & dreadful\\
			&scottish & montreal &secondary education  & political philosophy & burger king & scones & wonderful & freaks\\
			&great britain & ottawa & teachers &  geopolitics & smash burger & ice cream & faithful & cheapskates\\

			\bottomrule
		\end{tabular}
	}
	\vspace*{-1em}
\end{table*}

\setlength{\tabcolsep}{5pt}

\setlength{\tabcolsep}{2pt}
\begin{table}[ht]
	\centering
	\small
	\caption{Quantitative evaluation on discriminative topic mining.}
	\vspace*{-1em}
	\label{tab:quantity}
	\scalebox{0.92}{
		\begin{tabular}{c|cc|cc|cc|cc}
			\toprule
			\multirow{2}{*}{Methods} &
			\multicolumn{2}{c|}{\textbf{NYT}-Location} &
			\multicolumn{2}{c|}{\textbf{NYT}-Topic} & \multicolumn{2}{c|}{\textbf{Yelp}-Food} &
			\multicolumn{2}{c}{\textbf{Yelp}-Sentiment}\\
			& TC & MACC & TC & MACC & TC & MACC & TC & MACC \\
			\midrule
			LDA & 0.007 & 0.489 & 0.027 & 0.744 & -0.033 & 0.213 & -0.197 & 0.350 \\
			Seeded LDA & 0.024 & 0.168 & 0.031 & 0.456 & 0.016 & 0.188 & 0.049 & 0.223 \\
			TWE & 0.002 & 0.171 & -0.011 & 0.289 & 0.004 & 0.688 & -0.077 & 0.748 \\
			Anchored CorEx & 0.029 & 0.190 & 0.035 & 0.533 & 0.025 & 0.313 & 0.067 & 0.250 \\
			Labeled ETM & 0.032 & 0.493 & 0.025 & 0.889 & 0.012 & 0.775 & 0.026 & 0.852 \\
			\CatEm & \textbf{0.049} & \textbf{0.972} & \textbf{0.048} & \textbf{0.967} & \textbf{0.034} & \textbf{0.913} & \textbf{0.086} & \textbf{1.000} \\
			\bottomrule
		\end{tabular}
	}
	\vspace*{-2em}
\end{table}

\setlength{\tabcolsep}{5pt}

\noindent 
\textbf{Discussions.}
From Tables~\ref{tab:quality} and \ref{tab:quantity}, we observe that the standard topic model (LDA) retrieves reasonably good topics (even better than Seeded LDA and Anchored CorEx in some cases) relevant to category names, as long as careful manual selection of topics is performed. However, inspecting all the topics to select one's interested topics is inefficient and costly for users, especially when the number of topics is large. 

When users can provide a set of category names, seed guided topic modeling methods can directly retrieve relevant topics of user's interest, alleviating the burden of manual selection. Among the four guided topic modeling baselines, Seeded LDA and Anchored CorEx suffer from noisy retrieval results---some categories are dominated by off-topic terms. For example, Anchored CorEx retrieves words related to sports under the location category ``canada'', which should have been put under the topic category ``sports''. 

The above issue is rarely observed in the results of the other two embedding-based topic modeling baselines, TWE and Labeled ETM, because they employ distributed word representations when modeling topic word correlation, requiring the retrieved words to be semantically relevant to the category names. However, their results contain terms that are relevant but do not actually belong to the corresponding category. For example, Labeled ETM retrieves ``france'', ``germany'' and ``europe'' under the location category ``britain''. In short, TWE and Labeled ETM lacks \emph{discriminative} power over the set of provided categories, and fails to compare the relative \emph{generality/specificity} between a pair of terms (\eg, it is correct to put ``british'' under ``europe'', but incorrect vice versa).

Our proposed method \CatEm enjoys the benefits brought by word embeddings, and explicitly regularizes the embedding space to become discriminative for the provided set of categories. In addition, \CatEm learns the semantic specificity of each term in the corpus and enforces the words/phrases retrieved to be more specific than the category names. As shown in Tables~\ref{tab:quality} and \ref{tab:quantity}, \CatEm correctly retrieves distinctive terms that indeed belong to the category.

\subsection{Weakly-Supervised Text Classification}\label{sec:class}

In this subsection, we show that the discriminative power of \CatEm benefits document-level classification, and we explore the application of \CatEm to document classification under weak supervision.

Weakly-supervised text classification~\cite{Chang2008ImportanceOS,Meng2018WeaklySupervisedNT,Meng2019WeaklySupervisedHT,Song2014OnDH,zhang2019higitclass} uses category names or a set of keywords from each category instead of human annotated documents to train a classifier. It is especially preferable when manually labeling massive training documents is costly or difficult. 

Previous weakly-supervised document classification studies use unsupervised word representations to either retrieve from knowledge base relevant articles to category names as training data~\cite{Song2014OnDH}, or derive similar words and form pseudo training data for pre-training classifiers~\cite{Meng2018WeaklySupervisedNT,Meng2019WeaklySupervisedHT}. In this work, we do not propose new models for weakly-supervised document classification, but simply replace the unsupervised embeddings used in previous systems with \CatEm, based on the intuition that when the supervision is given on word-level, deriving discriminative word embeddings at the early stage is beneficial for all subsequent steps in weakly-supervised classification. 

In particular, we use WeSTClass~\cite{Meng2018WeaklySupervisedNT,Meng2019WeaklySupervisedHT} as the weakly-supervised classification model. WeSTClass first models topic distribution in the word embedding space to retrieve relevant words to the given category names, and applies self-training to bootstrap the model on unlabeled documents. It uses Word2Vec~\cite{Mikolov2013DistributedRO} as the word representation. In the following, we experiment with different word embedding models as input features to WeSTClass.

\begin{table*}[ht]
	\centering
	\caption{Weakly-supervised text classification evaluation based on WeSTClass~\cite{Meng2018WeaklySupervisedNT} model.}
    \vspace*{-1em}
	\label{tab:weak_class}
	\scalebox{1.0}{
		\begin{tabular}{c|cc|cc|cc|cc}
			\toprule
			\multirow{2}{*}{Embedding} &
			\multicolumn{2}{c|}{\textbf{NYT}-Location} &
			\multicolumn{2}{c|}{\textbf{NYT}-Topic} & \multicolumn{2}{c|}{\textbf{Yelp}-Food} &
			\multicolumn{2}{c}{\textbf{Yelp}-Sentiment}
			\\
			& Micro-F1 & Macro-F1 & Micro-F1 & Macro-F1 & Micro-F1 & Macro-F1 & Micro-F1 & Macro-F1 \\
			\midrule
			Word2Vec & 0.533 & 0.467 & 0.588 & 0.695 & 0.540 & 0.528 & 0.723 & 0.715 \\
			GloVe & 0.521 & 0.455 & 0.563 & 0.688 & 0.515 & 0.503 & 0.720 & 0.711 \\
			fastText & 0.543 & 0.485 & 0.575 & 0.693 & 0.544 & 0.529 & 0.738 & 0.743 \\
			BERT & 0.301 & 0.288 & 0.328 & 0.451 & 0.330 & 0.404 & 0.695 & 0.674 \\
			\CatEm & \textbf{0.655} & \textbf{0.613} & \textbf{0.611} & \textbf{0.739} & \textbf{0.656} & \textbf{0.648} & \textbf{0.838} & \textbf{0.836} \\
			\bottomrule
		\end{tabular}
	}
	\vspace*{-1em}
\end{table*}

\noindent
\textbf{Compared Methods.} We note here that our goal is \emph{not} designing a weakly-supervised classification method; instead, our purpose is to show that \CatEm benefits classification tasks with stronger discriminative power than unsupervised text embedding models by only leveraging category names. In this sense, our contribution is improving the input text feature quality for document classification when category names are available. To the best of our knowledge, this is the first work that proposes to learn discriminative text embedding only from category names (\ie, without requiring additional information other than the supervision given for weakly-supervised classification). We compare \CatEm with the following unsupervised text embedding baselines as input features to the state-of-the-art weakly-supervised classification model WeSTClass~\cite{Meng2018WeaklySupervisedNT,Meng2019WeaklySupervisedHT}. 
\begin{itemize}[leftmargin=*] 
\item Word2Vec~\cite{Mikolov2013DistributedRO}: Word2Vec
is a predictive word embedding model that learns distributed representations by maximizing the probability of using the center word to predict its local context words or in the opposite way.

\item GloVe~\cite{Pennington2014GloveGV}: GloVe
learns word embedding by factorizing a global word-word co-occurrence matrix where the co-occurrence is defined upon a fix-sized context window.

\item fastText~\cite{Bojanowski2017EnrichingWV}: fastText
is an extension of Word2Vec which learns word embedding efficiently by incorporating subword information. It uses the sum of vector representations of all n-grams in a word to predict context words in a fix-sized window.

\item BERT~\cite{Devlin2018BERTPO}:
BERT
is a state-of-the-art pretrained language model that provides contextualized word representations. It trains bi-directional Transformers~\cite{Vaswani2017AttentionIA} to predict randomly masked words and consecutive sentence relationships.

\end{itemize}

\noindent
\textbf{Evaluation Metrics.} We employ the Micro-F1 and Macro-F1 scores that are commonly used in multi-class classification evaluations~\cite{Meng2018WeaklySupervisedNT,Meng2019WeaklySupervisedHT} as the metrics.


\noindent
\textbf{Results.}
We first train all the embedding models on the corresponding corpus (except BERT which we take its pre-trained model and fine-tune it on the corpus), and use the trained embedding as the word representation to WeSTClass~\cite{Meng2018WeaklySupervisedNT,Meng2019WeaklySupervisedHT}. The weakly-supervised classification results on \textbf{NYT}-Location, \textbf{NYT}-Topic, \textbf{Yelp}-Food Type and \textbf{Yelp}-Sentiment are shown in Table~\ref{tab:weak_class}.

\smallskip
\noindent
\textbf{Discussions.} From Table~\ref{tab:weak_class}, we observe that: (1) Unsupervised embeddings (Word2Vec, GloVe and fastText) do not really have notable differences as word representations to WeSTClass;
(2) Despite its great effectiveness as a pre-trained deep language model for supervised tasks, BERT is not suitable for classification without sufficient training data, probably because BERT embedding has higher dimensionality (even the base model of BERT is $768$-dimensional) which might require stronger supervision signals to tune;
(3) \CatEm outperforms all unsupervised embeddings on \textbf{NYT}-Location and \textbf{Yelp}-Food Type and \textbf{Yelp}-Sentiment categories by a large margin, and have marginal advantage on \textbf{NYT}-Topic. This is probably because different locations (\eg, ``Canada'' vs. ``The United States''), food types (\eg, ``burgers'' vs. ``pizza''), and sentiment polarities (\eg, ``good'' vs. ``bad'') can have highly similar local contexts, and are more difficult to be differentiated than themes.
\CatEm explicitly regularizes the embedding space for the specific categories and becomes especially advantageous when the given category names are semantically similar. 

There have been very few previous efforts in the text classification literature that dedicate to learning discriminative word embeddings from word-level supervisions, and word embeddings are typically fine-tuned jointly with classification models~\cite{Kim2014ConvolutionalNN,Yang2016HierarchicalAN,Wang2018JointEO} via document-level supervisions. However, our study shows that under label scarcity scenarios, using word-level supervision only can bring significant improvements to weakly-supervised models. Therefore, it might be beneficial for future weakly-supervised/semi-supervised studies to also consider leveraging word-level supervision to gain a performance boost.

\subsection{Unsupervised Lexical Entailment Direction Identification}\label{sec:entail}
In \CatEm, we enforce the retrieved terms to be more specific than the given category name by comparing their learned distributional specificity values $\kappa$. Since $\kappa$ characterizes the semantic generality of a term, it can be directly applied to identify the direction in lexical entailment.

Lexical entailment (LE)~\cite{Vulic2017HyperLexAL} refers to the ``type-of'' relation, also known as hyponymy-hypernymy relation in NLP. LE typically includes two tasks: (1) Discriminate hypernymy from other relations (detection) and (2) Identify from a hyponymy-hypernymy pair which one is hyponymy (direction identification). Recently, there has been a line of supervised (\ie, require labeled hyponymy-hypernymy pairs as training data) embedding studies ~\cite{Nickel2017PoincarEF,Nickel2018LearningCH,Tifrea2019PoincareGH} that learn hyperbolic word embeddings to capture the lexical entailment relationships. 

In our evaluation, we focus on unsupervised methods for LE direction identification, which is closer to the application of \CatEm.

\noindent
\textbf{Compared Methods.} We compare \CatEm with the following unsupervised baselines that can identify the direction in a given hyponymy-hypernymy pair.
\begin{itemize}[leftmargin=*] 
\item Frequency~\cite{Weeds2004CharacterisingMO}: This baseline simply uses the frequency of a term in the corpus to characterize its generality. It hypothesizes that hypernyms are more frequent than hyponyms in the corpus.
\item SLQS~\cite{Santus2014ChasingHI}: SLQS measures the generality of a term via the entropy of its statistically most prominent context.
\item Vec-Norm: It is shown in \cite{Nguyen2017HierarchicalEF} that the L-2 norm of word embedding indicates the generality of a term, \ie, a general term tends to have a lower embedding norm, because it co-occurs with many different terms and its vector is dragged from different directions.
\end{itemize}

\noindent
\textbf{Benchmark Test Set.} Following~\cite{Santus2014ChasingHI}, we use the BLESS~\cite{Baroni2011HowWB} dataset for LE direction identification. BLESS contains $1,337$ unique hyponym-hypernym pairs. The task is to predict the directionality of hypernymy within each pair.

\noindent
\textbf{Results.} We train all models on the latest Wikipedia dump\footnote{\url{https://dumps.wikimedia.org/enwiki/latest/enwiki-latest-pages-articles.xml.bz2}} containing $2.4$ billion tokens and report the accuracy for hypernymy direction identification in Table~\ref{tab:le}.

\begin{table}[ht]
	\centering
	\caption{Lexical entailment direction identification.}
	\vspace*{-1em}
	\label{tab:le}
	\scalebox{1.0}{
		\begin{tabular}{c|cccc}
			\toprule
			Methods & Frequency & SLQS & Vec-Norm & \CatEm \\
			\midrule
			Accuracy & 0.659 & 0.861 & 0.562 & \textbf{0.895} \\
			\bottomrule
		\end{tabular}
	}
\vspace*{-1em}
\end{table}

\setlength{\tabcolsep}{5pt}
\begin{table*}[!htbp]
\centering
\caption{Coarse-to-fine topic presentation on \textbf{NYT}-Topic.}
\label{tab:kappa_nyt}
\scalebox{0.9}{
\begin{tabular}{|c|c|c|c|}
\hline
Range of $\kappa$ & Science ($\kappa_c = 0.539$) & Technology ($\kappa_c = 0.566$) & Health ($\kappa_c = 0.527$)\\
\hline
$\kappa_c < \kappa < 1.25\kappa_c$  & \makecell{scientist, academic, research, laboratory} 
	& \makecell{machine, equipment, devices, engineering} 
	& \makecell{medical, hospitals, patients, treatment}\\
\hline
$1.25\kappa_c < \kappa < 1.5\kappa_c$ & \makecell{physics, sociology,\\biology, astronomy}
	& \makecell{information technology, computing,\\telecommunication, biotechnology}
	&\makecell{mental hygiene, infectious diseases,\\hospitalizations, immunizations}\\
\hline
$1.5\kappa_c< \kappa < 1.75\kappa_c$ & \makecell{microbiology, anthropology,\\physiology, cosmology}
	& \makecell{wireless technology, nanotechnology,\\semiconductor industry, microelectronics}
	&\makecell{dental care, chronic illnesses,\\cardiovascular disease, diabetes}\\
\hline
$\kappa > 1.75\kappa_c$ & \makecell{national science foundation,\\george washington university,\\ hong kong university,\\american academy} 
	& \makecell{integrated circuits,\\assemblers,\\circuit board,\\advanced micro devices}
	&\makecell{juvenile diabetes,\\high blood pressure,\\family violence,\\kidney failure}\\
\hline
\end{tabular}
}
\end{table*}

\setcounter{subfigure}{0}
\begin{figure*}[h]
	\centering
	\subfigure[WordSim][Epoch 1]{
		\label{fig:e1}
		\includegraphics[width = 0.32\textwidth]{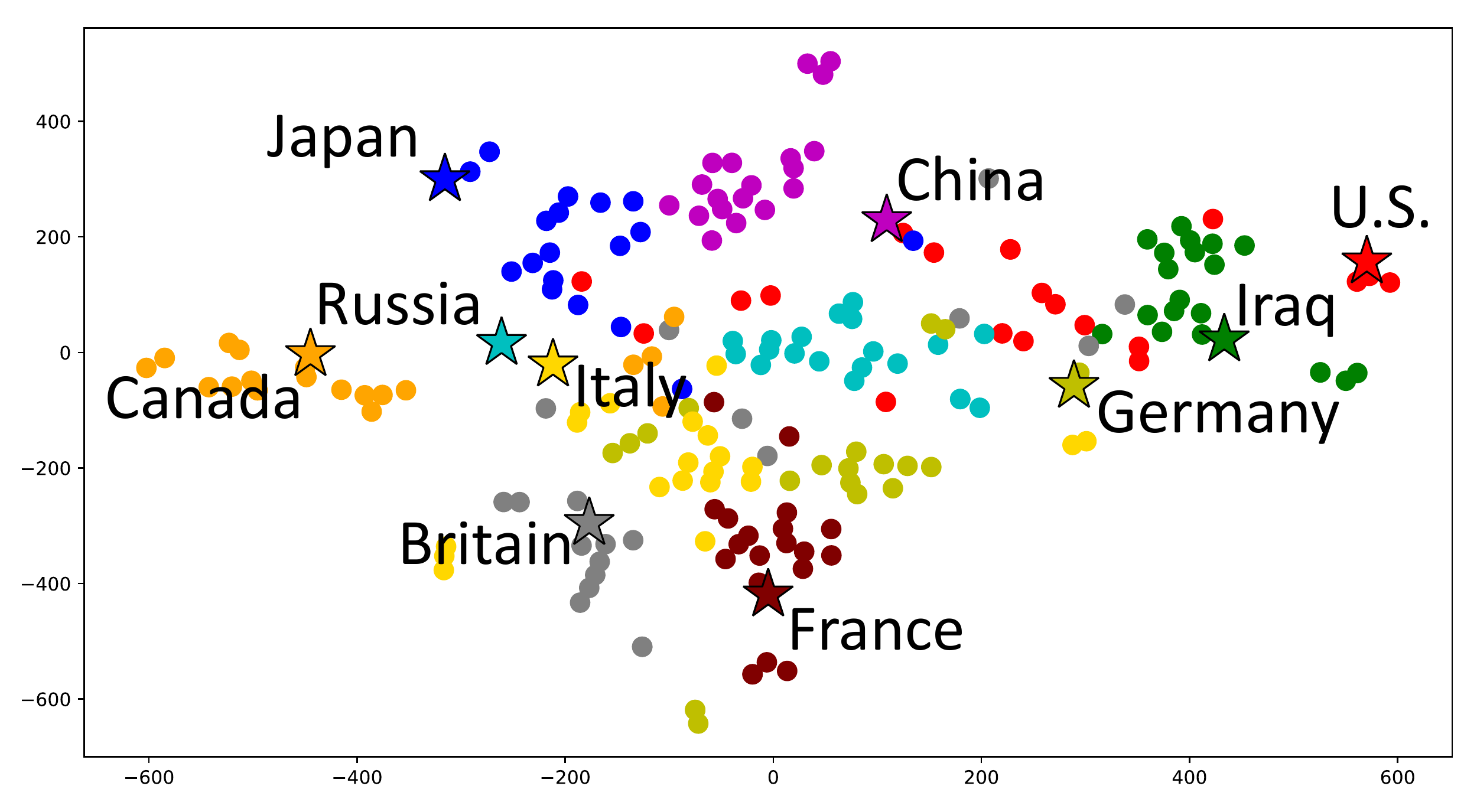}
	}
	\subfigure[WordSim][Epoch 3]{
		\label{fig:e3}
		\includegraphics[width = 0.32\textwidth]{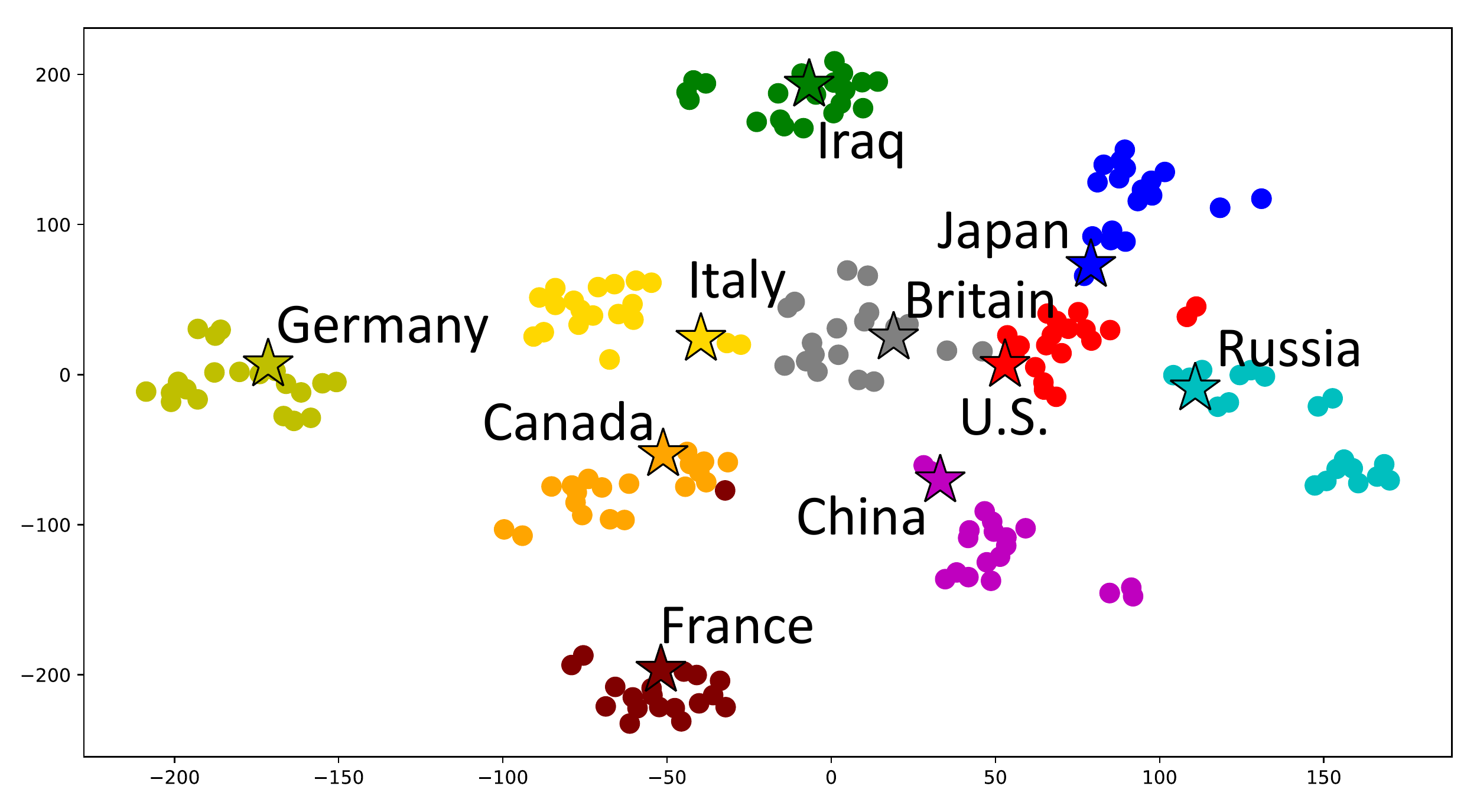}
	}
	\subfigure[WordSim][Epoch 5]{
		\label{fig:e5}
		\includegraphics[width = 0.32\textwidth]{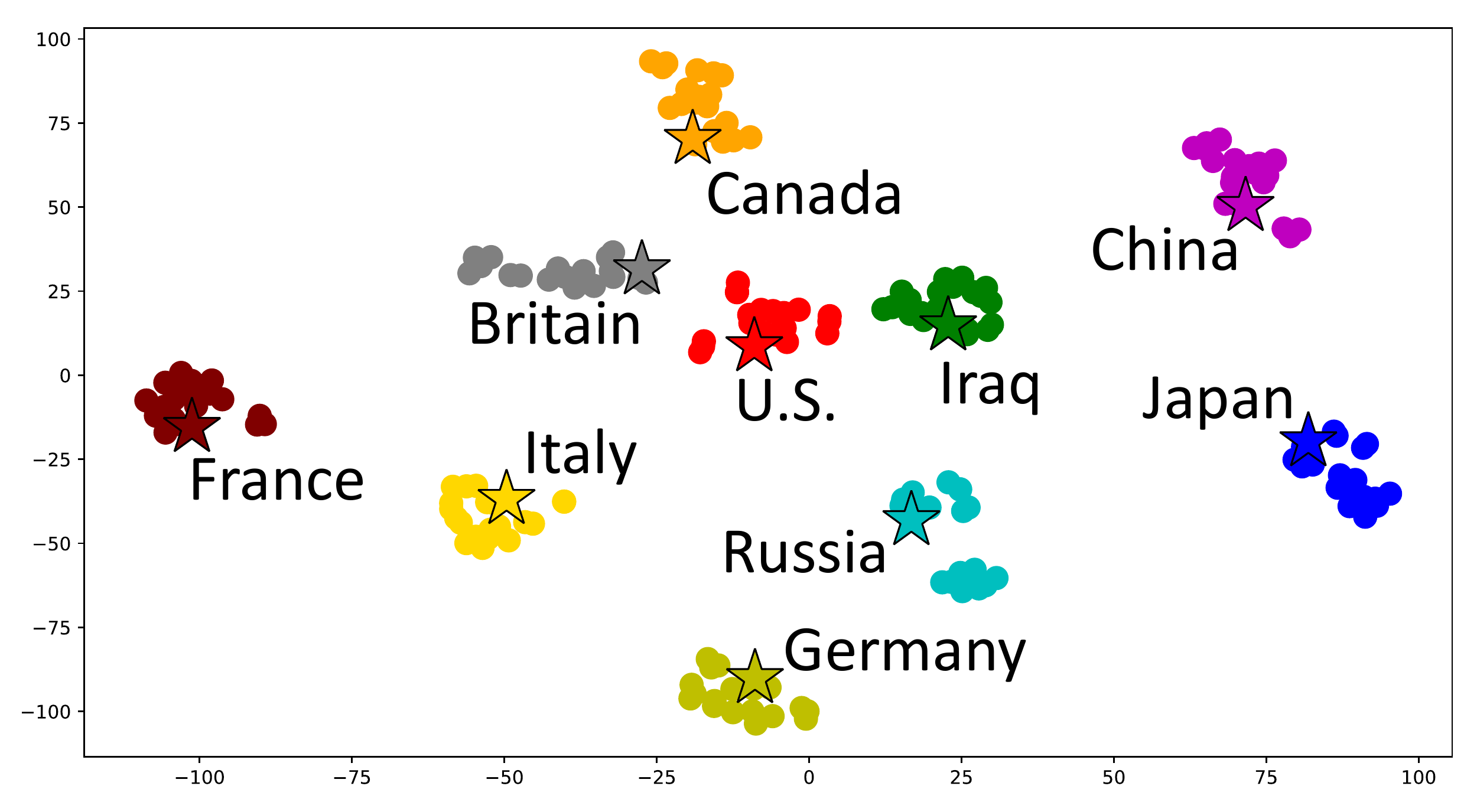}
	}
	\vspace*{-1em}
	\caption{Discriminative embedding space training for topic mining.}
	\label{fig:iter}
\end{figure*}

\noindent
\textbf{Discussions.} Our method achieves the highest accuracy on identifying the direction of lexical entailment among a pair of words, which explains the great effectiveness of \CatEm on retrieving terms that belong to a category. Another desirable property of \CatEm is that distributional specificity is jointly trained along with the text embedding, and can be directly obtained as a by-product. Our learning of word distributional specificity is based on the distributional inclusion hypothesis~\cite{ZhitomirskyGeffet2005TheDI} and has a probabilistic interpretation presented in Section~\ref{sec:explain}.

\subsection{Case Study}\label{sec:case}

\noindent
\textbf{Discriminative Embedding Space.}
In this case study, we demonstrate the effect of regularizing the embedding space with category representative words. Specifically, we apply t-SNE \cite{Maaten2008VisualizingDU} to visualize the embeddings trained on \textbf{NYT}-Location in Figure \ref{fig:iter} where category embeddings are denoted as stars, and the retrieved class representative phrases are denoted as points with the same color as their ground-truth corresponding categories. At the early stage of training (Figure \ref{fig:e1}), words from different categories are mixed together. During training, the categories are becoming well-separated. Category representative words gather around their corresponding category in the embedding space, which encourages other semantically similar words to move towards their belonging categories as well (Figure \ref{fig:iter} shows more words than class representative words retrieved by our model during training).

\noindent
\textbf{Coarse-to-Fine Topic Presentation.}
In the second set of case studies, we demonstrate the learned word distributional specificity with concrete examples from \textbf{NYT}-Topic, and illustrate how it helps present a topic in a coarse-to-fine manner. Table~\ref{tab:kappa_nyt} lists the most similar phrases with each category (measured by embedding cosine similarity) from different ranges of distributional specificity. When $\kappa$ is smaller, the retrieved words have wider semantic coverage.

A drawback of traditional topic modeling is that it presents each category via a top ranked list according to topic-word distribution, which usually seems randomly-ordered because latent probability distribution is generally hard to be interpreted by humans. In our model, however, one can sort the retrieved phrases under each topic according to distributional specificity, so that the topic mining results can be viewed in a coarse-to-fine manner.


\section{Related Work}

We review two lines of related work that are most relevant to our task: Topic modeling and task-oriented text embedding.

\subsection{Topic Modeling} 

Topic models discover semantically relevant terms that form coherent topics via probabilistic generative models. Unsupervised topic models have been studied for decades, among which pLSA \cite{Hofmann1999ProbabilisticLS} and LDA \cite{Blei2003LatentDA} are the most famous ones, serving as the backbone for many future variants. The basic idea is to represent documents via mixtures over latent topics, where each topic is characterized by a distribution over words. Subsequent studies lead to a large number of variants such as Hierarchical LDA \cite{griffiths2004hierarchical}, Correlated Topic Models \cite{blei2006correlated}, Pachinko Allocation \cite{li2006pachinko} and Concept Topic Models \cite{chemudugunta2008combining}. Although unsupervised topic models are sufficiently expressive to model multiple topics per document, they are unable to incorporate the category and label information into their learning procedure. 

Several modifications of topic models have been
proposed to incorporate supervision. Supervised LDA \cite{mcauliffe2008supervised} and DiscLDA \cite{lacoste2009disclda} assume each document is associated with a label and train the model by predicting the document category label. Author Topic Models \cite{rosen2004author} and Multi-Label Topic Models \cite{rubin2012statistical} further model each document as a bag of words with a bag of labels. However, these models obtain topics that do not correspond directly to the labels. Labeled LDA \cite{ramage2009labeled} and SSHLDA \cite{mao2012sshlda} can be used to solve this problem. However, all the \textit{supervised} models mentioned above requires sufficient annotated documents, which are expensive to obtain in some domains. In contrast, our model relies on very weak supervisions (\ie, a set of category names) which are much easier to obtain.

Several studies leverage word-level supervision to build topic models. For example, Dirichlet Forest~\cite{Andrzejewski2009LatentDA} has been used as priors to incorporate must-link and cannot-link constraints among seed words. Seeded LDA~\cite{Jagarlamudi2012IncorporatingLP} takes user-provided seed words as supervision to learn seed-related topics via a seed topic distribution. CorEx~\cite{Gallagher2017AnchoredCE} learns maximally informative topics from the corpus and uses total correlation as the measure. It can incorporate seed words by jointly compressing the text corpus and preserving seed relevant information. However, none of the above systems \textit{explicitly} model distinction among different topics, and they also do not require the retrieved terms to belong to the provided categories. As a result, the retrieved topics still suffer from irrelevant term intrusion, as we will demonstrate in the experiment section.

With the development of word embeddings \cite{Mikolov2013DistributedRO,Pennington2014GloveGV,Bojanowski2017EnrichingWV}, several studies propose to extend LDA to involve word embeddings. One common strategy is to convert the discrete text into continuous representations of embeddings, and then adapt LDA to generate real-valued data \cite{das2015gaussian,xun2016topic,batmanghelich2016nonparametric,Xun2017CollaborativelyIT}. There are a few other ways of combining LDA and embeddings. For example, \cite{nguyen2015improving} mixes the likelihood defined by LDA with a log-linear
model that uses pre-fitted word embeddings; \cite{xu2018distilled} adopts a geometric
perspective, using Wasserstein distances to learn topics and word embeddings jointly; \cite{Dieng2019TopicMI} uses the distributed representation of word embedding to enhance the robustness of topic models to rare words. Motivated by the success of these recent topic models, we model the text generation process in the embedding space, and propose several designs to tailor our model for the task of discriminative topic mining.

\subsection{Task-Oriented Text Embedding}

Discriminative text embeddings are typically trained under a supervised manner with task specific training data, such as training CNNs~\cite{Kim2014ConvolutionalNN} or RNNs~\cite{Yang2016HierarchicalAN} for text classification. Among supervised word embedding models, some previous studies are more relevant because they explicitly leverage the category information to optimize embedding for classification tasks. Predictive Text Embedding (PTE)~\cite{Tang2015PTEPT} constructs a heterogeneous text network and jointly embeds words, documents and labels based on word-word and word-document co-occurrences as well as labeled documents. 
Label-Embedding Attentive Model~\cite{Wang2018JointEO} jointly embeds words and labels so that attention mechanisms can be employed to discover category distinctive words. 
All the above frameworks require labeled training documents for fine-tuning word embeddings. Our method only requires category names to learn a discriminative embedding space over the categories, which are much easier to obtain.

Some recent studies propose to learn embeddings for lexical entailment, which is relevant to our task because it may help determine which terms belong to a category. Hyperbolic models such as Poincar\'e \cite{Nickel2017PoincarEF,Tifrea2019PoincareGH,Dhingra2018EmbeddingTI}, Lorentz~\cite{Nickel2018LearningCH} and hyperbolic cone~ \cite{Ganea2018HyperbolicEC} have proven successful in graded lexical entailment detection. However, the above models are supervised and require hypernym-hyponym training pairs, which may not be available under the setting of topic discovery. Our model jointly learns the word vector representation in the embedding space and its distributional specificity without requiring supervision, and simultaneously considers relatedness and specificity of words when retrieving category representative terms.



\section{Conclusions and Future Work}

In this paper, we first propose a new task for topic discovery, discriminative topic mining, which aims to mine distinctive topics from text corpora guided by category names only. Then we introduce a category-name guided word embedding framework \CatEm that learns category distinctive text embedding by modeling the text generation process conditioned on the user provided categories. \CatEm effectively retrieves class representative terms based on both relatedness and specificity of words, by jointly learning word embedding and word distributional specificity. Experiments show that \CatEm retrieves high-quality distinctive topics, and benefits downstream tasks including weakly-supervised document classification and unsupervised lexical entailment direction identification.

In the future, we are interested in extending \CatEm to not only focus on user provided categories, but also have the potential to discover other latent topics in a text corpus, probably via distant supervision from knowledge bases. 
There are a wide range of downstream tasks that may benefit from \CatEm. For example,  we would like to exploit \CatEm for unsupervised taxonomy construction~\cite{Zhang2018TaxoGenCT} by applying \CatEm recursively at each level of the taxonomy to find potential children nodes. Furthermore, \CatEm might help entity set expansion via generating auxiliary sets consisting of relevant words to seed words~\cite{huang2020setcoexpan}.




\begin{acks}
Research was sponsored in part by DARPA under Agreements No. W911NF-17-C-0099 and FA8750-19-2-1004, National Science Foundation IIS 16-18481, IIS 17-04532, and IIS-17-41317, and DTRA HDTRA11810026. 
Any opinions, findings, and conclusions or recommendations expressed in this document are those of the author(s) and should not be interpreted as the views of any U.S. Government. The U.S. Government is authorized to reproduce and distribute reprints for Government purposes notwithstanding any copyright notation hereon.
We thank anonymous reviewers for valuable and insightful feedback.
\end{acks}

\bibliographystyle{ACM-Reference-Format}
\balance
\bibliography{ref}

\end{document}